\definecolor{DarkBlue}{rgb}{0.1,0.1,0.5}
\newtheorem{theorem}{Theorem}
\newtheorem{corollary}{Corollary}[theorem]
\title{Ensuring Fairness under Prior Probability Shifts}
\author{
Arpita Biswas$^1$\and
Suvam Mukherjee$^2$\\
\affiliations
$^1$Indian Institute of Science\\
$^2$Microsoft Research\\
\emails
arpitab@iisc.ac.in,
suvamm@outlook.com
}
\newcommand{\cape}{\mathtt{CAPE}}
\newcommand{\BClass}{\mathtt{Max\_Acc}}
\newcommand{\Testdata}{\mathbb{D}}
\newcommand{\ppsampling}{\mathtt{PP}\text{-}\mathtt{SAMPLING}}
\newcommand{\qalgo}{\mathtt{Q}\text{-}\mathtt{ALG}}
\newcommand{\clalgo}{\mathtt{C}\text{-}\mathtt{ALG}}
\newcommand{\predPrev}{\hat{\rho}}
\newcommand{\truePrev}{{\rho}}
\newcommand{\estPrev}{\hat{q}}
\DeclareMathOperator*{\argmin}{arg\,min}
\newlength\myindent
\newcommand\bindent{%
  \begingroup
  \setlength{\itemindent}{\myindent}
  \addtolength{\algorithmicindent}{\myindent}
}
\newcommand\eindent{\endgroup}                                                                                                                                                                                                                                                                                                                                                                                                                                                                                                                                                                                                           
\begin{document}

\maketitle

\begin{abstract}
In this paper, we study the problem of fair classification in the presence of prior probability shifts, where the training set distribution differs from the test set. This phenomenon can be observed in the yearly records of several real-world datasets, such as recidivism records and medical expenditure surveys. If unaccounted for, such shifts can cause the predictions of a classifier to become unfair towards specific population subgroups. While the fairness notion called \emph{Proportional Equality} (PE) accounts for such shifts, a procedure to ensure PE-fairness was unknown. 

In this work, we propose a method, called $\cape$, which provides a comprehensive solution to the aforementioned problem. $\cape$ makes novel use of prevalence estimation (\emph{quantification}) techniques, sampling and an ensemble of classifiers to ensure fair predictions under prior probability shifts. We introduce a metric, called \emph{prevalence difference} (PD), which $\cape$ attempts to minimize in order to ensure PE-fairness. We theoretically establish that this metric exhibits several desirable properties. 

We evaluate the efficacy of $\cape$ via a thorough empirical evaluation on synthetic datasets. We also compare the performance of $\cape$ with several popular fair classifiers on real-world datasets like COMPAS (criminal risk assessment) and MEPS (medical expenditure panel survey). The results indicate that $\cape$ ensures PE-fair predictions, while performing well on other performance metrics.
\end{abstract}

\section{Introduction}
\label{sec:intro}
Machine learning techniques are being increasingly applied in making important societal decisions, such as criminal risk assessment, school admission, hiring, sanctioning of loans, etc. Given the impact and sensitivity of such predictions, there is warranted concern regarding implicit discriminatory traits exhibited by such techniques. Such discrimination may be detrimental for certain population subgroups with a specific race, gender, ethnicity, etc, and may even be illegal under certain circumstances~\cite{angwin2016machine}. These concerns have spurred vast research in the area of \emph{algorithmic fairness} \cite{corbett2018measure,dressel2018accuracy,chouldechova2018frontiers,friedler2018comparative,zhang2018achieving,berk2017fairness,kleinberg2018discrimination,barocas2016big,chouldechova2017fair,romei2014multidisciplinary}. Most of these papers aim to establish fairness notions for a group of individuals (differentiated by their race, gender, etc.), and are classified as \textit{group fairness} notions.

A possible, and less studied, cause for unfairness in predictions involve distributional changes (or drift) between the training and test datasets. Disparities can be introduced when the sub-populations evolve differently over time~\cite{barocas2017fairness}. There are important real-world scenarios where a type of distributional change, called \textit{prior probability shift}, occurs. Informally, a prior probability shift occurs when the fraction of positively labeled instances differ between the training and the test datasets (see Section~\ref{sec:shift} for a formal definition). A concrete example is the COMPAS 
 dataset~\cite{url:compas} which contains demographic information and criminal history of defendants, and records whether they recommitted a crime within a certain period of time (positive labels are given to the re-offenders, while others have negative labels). We observe that, among the valid records screened in the year $2013$, the fraction of Caucasian and African-American re-offenders were $0.327$ and $0.486$, respectively. However, in $2014$, these fractions were $0.636$ and $0.706$, respectively. This indicates that the extent of prior probability shift differs among Caucasian and African-American defendants, between the records of $2013$ and $2014$. 

If such distributional changes are unaccounted for, a classifier may end up being unfair towards the population subgroups which exhibit prior probability shifts. For example, if the rate of recidivism among a particular sensitive group reduces drastically, then a classifier trained with a higher rate of recidivism can create extreme unfairness towards individuals of that sub-population. In this work, we address these concerns and propose a method to obtain fair predictions under prior probability shifts.\\ 

\noindent \textbf{Related Work. }A large body of work defines various \textit{group fairness} notions and provides algorithms to mitigate unfairness. Among these, \emph{Proportional Equality} (PE) \cite{biswas2019fairness,hunter2000proportional}  appears to be the most appropriate fairness notion for addressing prior probability shifts among population subgroups (see Section~\ref{sec:pe} for definition). However, the existing results stop short of providing a procedure to ensure PE-fair predictions. We address this concern by proposing an end-to-end solution.


Apart from PE, there are other group fairness notions, none of which address prior probability shifts explicitly, such as \textit{Disparate Impact}~\cite{feldman2015certifying,zafar2017afairness,kamiran2012data,calders2009building}, \textit{Statistical Parity}~\cite{corbett2017algorithmic,kamishima2012fairness,zemel2013learning},
\textit{Equalized Odds}~ \cite{hardt2016equality,kleinberg2017inherent,woodworth2017learning}, and \textit{Disparate Mistreatment}~\cite{zafar2017bfairness}. 
%
%

Unfortunately, all these fairness constraints are often non-convex, thereby making the optimization problem (maximizing accuracy subject to fairness constraints) difficult to solve efficiently. Several papers provide convex surrogates of the non-convex constraints~\cite{goh2016satisfying,zafar2017afairness}, or finds near-optimal near-feasible solutions~\cite{cotter2018training,celis2018classification}, or propose techniques to reduce dependence of group information on the predictions~\cite{kamiran2012data,kamiran2012decision,pleiss2017fairness,zhang2018mitigating}\footnote{Note that the \textit{group fairness} notions require the test set to be of (statistically) significant size for fairness evaluation.}. However, most of these solutions assume that the training and test datasets are identically and independently drawn from some common population distribution, and thus suffer in the presence of prior probability shifts (we provide empirical evidences in Section~\ref{sec:eval}).\\

\noindent \textbf{Our Contributions. }To the best of our knowledge, we are the first to propose an end-to-end solution to ensure fair predictions in the presence of prior probability shifts. 
\begin{enumerate}[leftmargin=*]
    \item We design a system called $\cape$ (\underline{C}ombinatorial \underline{A}lgorithm for \underline{P}roportional \underline{E}quality)
    in Section~\ref{sec:cape}.
    \item We introduce a metric called \emph{Prevalence Difference} (PD), which $\cape$ attempts to minimize in order to ensure PE-fairness. We theoretically establish that the PD metric exhibits several desirable properties (Theorems~\ref{thm:perfect}, \ref{thm:app})---in particular, we show that maximizing the accuracy of any subgroup is not at odds with minimizing PD. This metric also provides insights into why the predictions of $\cape$ are fair (Theorem~\ref{claim:cape}). We discuss these in Section~\ref{sec:pd-fair} and \ref{sec:theory}.
    \item We perform a thorough evaluation of $\cape$ on synthetic and real-world datasets, and compare with several other fair classifiers. In Section~\ref{sec:eval}, we provide empirical evidence that $\cape$ provides PE-fair predictions, while performing well on other fairness metrics.
\end{enumerate}
\section{Background and Notations}
\label{sec:prelims}
In this paper, we focus on the binary classification problem, under \textit{prior probability shifts}. Let $\hat{h}:\mathcal{X}\mapsto\mathcal{Y}$ be the prediction function, defined in some hypothesis space $\mathcal{H}$, where $\mathcal{X}\subset \mathbb{R}^m$ is the $m$-dimensional feature space and $\mathcal{Y} = \{0,1\}$ is the label space. The goal of a classification problem is to learn the function $\hat{h}$ which minimizes a target loss function, say, misclassification error $\mathbb{P}[\hat{h}(X)\neq Y]$ (variables $X$ and $Y$ denote feature vectors and labels). However, if these predictions $\hat{h}(\cdot)$ are used for societal decision making, it becomes crucial to ensure lower misclassification error not only on an average but also within each group defined by their sensitive attribute values such as race, gender, ethnicity, etc. Dropping these sensitive attributes blindly from the dataset may not be enough to alleviate discrimination since some non-sensitive features can be closely correlated to the sensitive attributes~\cite{zliobaite2016using,corbett2018measure,hardt2016equality}. 
Hence, most existing solutions assume access to the sensitive attributes. 
In the presence of such a sensitive attribute with $G$ sub-populations, the goal is to learn $\hat{h}:\mathcal{X}\times [G] \mapsto\mathcal{Y}$ satisfying certain group-fairness criteria (where $[G]$ denotes the set $\{0,1,\ldots,G-1\}$). We use variable $Z\in[G]$ to denote group membership (one can encode multiple sensitive attributes into $[G]$).  
We assume that the \emph{training dataset} $D=\{(x_i,z_i, y_i)_{i=1}^N\}$ is drawn from an unknown joint distribution $\mathcal{P}$ over $\mathcal{X}\times[G]\times\mathcal{Y}$. The performance of the classifier is measured using a new set of data, referred as \emph{test dataset} $\Testdata=\{(x_j, z_j, y_j)_{j=1}^n\}$, by observing how accurate and fair the $\hat{h}(x_j)$s are with respect to the true labels $y_j$s. 

Next, we focus on an important phenomenon called \textit{prior probability shift}, which may cause a learned classifier to be unfair in its predictions on a test dataset.

\subsection{Prior Probability Shift}
\label{sec:shift}
\textit{Prior probability shift}~\cite{saerens2002adjusting,moreno2012unifying,kull2014patterns} occurs when the prior class-probability $\mathcal{P}(Y)$ changes between the training and test sets, but the class conditional probability $\mathcal{P}(X|Y)$ remains unaltered. Such changes, within a sub-population, occur in many real-world scenarios, that is, $\mathcal{P}(X|Y$=$1, Z$=$z)$ remains constant but $\mathcal{P}(Y$=$1|Z$=$z)$ 
changes between the training and test datasets. If left unaccounted for, it may lead to \textit{unfair} predictions~\cite{barocas2017fairness}. 

\subsection{Proportional Equality}
\label{sec:pe}
To address the fairness concern under prior probability shifts, a notion called \textit{proportional equality} (PE) was formalized in ~\cite{biswas2019fairness}. A classifier is said to be PE-fair if it has low values for the following expression:
\[\mbox{PE}^{z,z'}:=\left|\frac{\rho^z_{\Testdata}}{\rho^{z'}_{\Testdata}} - \frac{\hat{\rho}^z_{\Testdata}}{\hat{\rho}^{z'}_{\Testdata}}\right| \mbox{ for all } z,z'\in[G]\]
\noindent $\bullet$ \textit{True prevalence} $\rho^z_{\Testdata}$ is the fraction of population, from the group $z$, labeled positive in the dataset ${\Testdata}$.
\begin{equation}\label{eq:TruePrev}
{\rho}_{\Testdata}^z:= \frac{\left|\{(x_i,z_i,y_i)\in\Testdata\ \vert\ y_i=1, z_i=z\}\right|}{\left|\{(x_i,z_i, y_i)\in\Testdata\ \vert\ z_i=z\}\right|}. 
\end{equation}
\noindent  $\bullet$ \textit{Prediction prevalence} $\hat{\rho}^z_{\Testdata}$ is the fraction of population, from the group $z$, predicted positive by the classifier for $\Testdata$.
\begin{equation}\label{eq:PredPrev}
\hat{\rho}_{\Testdata}^z:= \frac{\left|\{(x_i,z_i,y_i)\in\Testdata\ \vert\ \hat{y}_i=1, z_i=z\}\right|}{\left|\{(x_i,z_i, y_i)\in\Testdata\ \vert\ z_i=z\}\right|}.
\end{equation}

However, Biswas and Mukherjee~\shortcite{biswas2019fairness} do not provide any algorithm for ensuring PE-fair predictions. Any such algorithm must deal with the following key challenges:
\begin{enumerate}[leftmargin=*]
 \item PE$^{z,z'}\leq\epsilon$ (for a small $\epsilon$) is a non-convex constraint. Thus, it is hard to directly optimize for accuracy subject to this constraint for all $z,z'\in[G]$. 
    \item The definition of PE uses true prevalences of the test datasets $\rho_\Testdata^z$, which are unavailable to the classifier during the prediction phase. Thus, an algorithm needs to \emph{estimate} these prevalences. Techniques from the \textit{quantification} literature can be leveraged to solve this concern, which we describe next.
\end{enumerate}

\subsection{Quantification Problem}
\label{sec:quant}
Quantification learning (or \emph{prevalence estimation}) is a supervised learning problem, introduced by Forman~\shortcite{forman2005counting}. It aims to predict an aggregated quantity for a set of instances. 
The goal is to learn a function, called \emph{quantifier} $q:\mathcal{X}^{\mathbb{N}}\mapsto [0,1]$, that outputs an estimate of the true prevalence of a finite, non-empty and unlabeled test set $\Testdata\sim\mathcal{X}^{\mathbb{N}}$. As highlighted by Forman, quantification is not a by-product of classification~\cite{gonzalez2017quantification}. In fact, unlike assumptions made in classification, quantification techniques account for changes in prior probabilities $\mathcal{P}(Y|Z)$ within subgroups, while assuming $\mathcal{P}(X|Y,Z)$ remain the same over the training and test datasets. This allows quantifiers to perform better than na\"ive classify and count techniques, as demonstrated by Forman~\shortcite{forman2006quantifying}.

Some commonly used algorithms to construct quantifiers are \emph{Adjusted Classify and Count} (\texttt{ACC}) \cite{forman2006quantifying}, \emph{Scaled Probability Average} (\texttt{SPA})~\cite{bella2010quantification}, and \texttt{HDy}~\cite{gonzalez2013class}. These algorithms can be used to estimate the prevalence of a group in the test set.

For ease of exposition, we describe a simple quantification technique, \texttt{ACC}. This method learns a binary classifier from the training set and estimates its true positive rates ($\mathit{TPR}$) and false positive rates ($\mathit{FPR}$) via $k$-fold cross-validation. Using this trained model, the algorithm counts the number of cases on which the classifier outputs positive on the test set. Finally, the true fraction of positives (true prevalence) is estimated via the equation 
$p = \frac{p'-\mathit{FPR}}{\mathit{TPR}-\mathit{FPR}}$,
where $p'$ denotes the fraction of predicted positives, $p':= \frac{\#\mathit{predicted\_ positives}}{\#\mathit{test\_data\_ points}}$.  The use of $\mathit{TPR}=\frac{TP}{TP+FN}$ and $\mathit{FPR}=\frac{FP}{FP+TN}$ from the training set can be justified by the assumption that $P(X|Y)$ remains same in the training and test datasets. This simple algorithm turns out to provide good estimates of prevalences under prior probability shifts. However, for our experiments, we use \texttt{SPA}~\cite{bella2010quantification}, which uses a probability estimator instead of a classifier, and turns out to be more robust to variations while estimating probabilities of a dataset with a few samples.\\ 

Next, we discuss $\cape$, which provides a comprehensive solution to the above problems by combining quantification techniques along with training an ensemble of classifiers.

\section{CAPE}
\label{sec:cape}

In this section, we introduce $\cape$~(\underline{C}ombinatorial \underline{A}lgorithm for \underline{P}roportional \underline{E}quality), for ensuring PE-fair predictions. 
$\cape$ takes as input a training dataset $D$ and a vector of desired prevalences $\Theta=(\theta_1, \ldots, \theta_k)\in[0,1]^k$. $\cape$ is separately trained for each group $z \in [G]$, since we hypothesize that the relationship between the non-sensitive features $X$ and the outcome variable $Y$ may differ across groups. Thus, each group would be best served by training classifiers on datasets obtained from the corresponding group\footnote{Training a separate classifier for a small-sized subgroup may be inappropriate. For the datasets we consider, this issue never arises.}. Such decoupled classifiers are also considered by Dwork et al.~\shortcite{dwork2017decoupled}, but they do not handle prior probability shifts. 

The training phase outputs, for each group $z$, the following:
\begin{enumerate}[leftmargin=*]
\item a set of $|\Theta|$ classifiers, each trained using a sampling of the training dataset obtained by the module $\ppsampling$, which takes as input a prevalence parameter $\theta\in\Theta$ and a training set with $N_z$ data points. It randomly selects, with replacement, $\theta \times N_z$ instances with $Y$=$1$ and $(1-\theta)\times N_z$ instances with $Y$=$0$. Thus, it outputs a sample of size $N_z$. Each classifier is thus specialized in providing accurate predictions on datasets with particular prevalences.
\item a quantifier $\estPrev^z(\cdot)$, generated by the $\qalgo$ module, which is subsequently used in the prediction phase of $\cape$ to estimate the true prevalence of the test dataset, $\truePrev_\Testdata^z$. Separate quantifiers are created for each group since the extent of prior probability shifts may differ across groups. 
\end{enumerate}

During the prediction phase, for each group $z$, an estimate of the prevalence of the test data $\Testdata^z$ is obtained using $\hat{q}^z(\cdot)$ (learned in the training phase). This estimate is then used to choose the classifier $J_z$ that minimizes the \emph{prevalence difference} metric (Section~\ref{sec:pd-fair}). Finally, $\cape$ outputs the predictions of the classifier $J_z$ on the test set $\Testdata^z$. 

\begin{algorithm}[h!]\label{algo:CAPE}
\scriptsize{
\begin{flushleft}
\textbf{\underline{Training Phase}:}\\ 
\textbf{Input:} Training dataset $D$, $\qalgo$, $\ppsampling$, $\clalgo$, and a vector of prevalence parameters $\Theta:= (\theta_1, \ldots, \theta_k\}$.\\
\end{flushleft}
\begin{algorithmic}
	 \STATE \textbf{Step 1:} Partition $D=\{(x_i, z_i, y_i)_{i=1}^N\}$ based on $z_i$ values.
    \bindent
    \STATE $D^z\leftarrow\ \{(x_i, z_i, y_i)\in D\ |\ z_i=z\}$ for each group $z$.
    \eindent
    \STATE \textbf{Step 2:} Create quantifiers, one for each $z$.
    \bindent
    \STATE $\hat{q}^z(\cdot)\leftarrow$ $\qalgo$($D^z$).
    \eindent
    \STATE \textbf{Step 3:} Create a set of $k$ classifiers, for each $z$.
    \bindent
    \FORALL{$\theta$ in $\{\theta_1, \ldots, \theta_k\}$}
    	\STATE $T^z\leftarrow$ $\ppsampling$ ($D^z, \theta$).
    	\STATE $\hat{h}^z_{\theta}(\cdot)\leftarrow \clalgo$ ($T^z$).
    \ENDFOR
    \eindent
    \STATE \textbf{Output:} $\hat{q}^z$ and $(\hat{h}^z_{\theta_j})_{j=1}^k$.
\end{algorithmic}
$\ $
\begin{flushleft}
\textbf{\underline{Prediction Phase}:}\\
\textbf{Input:} Test dataset $\Testdata$, and the quantifiers and classifiers obtained after the training phase.\\
\end{flushleft}
\begin{algorithmic}
	 \STATE \textbf{Step 1:} Partition  $\Testdata=\{(x_i, z_i, y_i)_{i=1}^n\}$ based on $z_i$ values.
    \bindent
    \STATE ${\Testdata}^z\leftarrow\ \{(x_i, z_i, y_i)\in {\Testdata}\ |\ z_i=z\}$ for each group $z$.
    \eindent
    \STATE \textbf{Step 2:} Estimate prevalences $\hat{q}^z({\Testdata}^z)$ using the quantifiers built in training phase.
    \STATE \textbf{Step 3:} Choose the best classifier in terms of estimated PD, for each $z$.
    \bindent
    \FORALL{$\theta$ in $\{\theta_1, \ldots, \theta_k\}$}
    	\STATE $\hat{y}^i_{\theta}\leftarrow \mathtt{sgn}(\hat{h}^z_{\theta}(x_i))$ for all $i\in\{1,\ldots,|\Testdata^z|\}$.
    	\STATE $\predPrev^z_{\theta}\leftarrow{|\{i\in {\Testdata}^z:\ \hat{y}^i_{\theta}==1\}|}/{|\Testdata^z|}$.
    \ENDFOR
    \STATE $J_z \! \leftarrow \! \displaystyle \argmin_{\theta\in\Theta} |\predPrev^z_{\theta} - \hat{q}^z(\Testdata^z)|$ \COMMENT{Best Classifier for $z$}
    \eindent
    \STATE \textbf{Output:} The predictions $\hat{y}^z_{J_z}$ for group $z$.    
\end{algorithmic}
\caption{\label{algo:cape}The $\cape$ meta-algorithm.}}
\end{algorithm}

Note that $\cape$ provides the flexibility to plug in any classification and quantification algorithm into modules $\clalgo$ and $\qalgo$. Key to $\cape$ is the \emph{prevalence difference} metric, used in Step $3$ of the prediction phase. We formalize the metric and discuss some of its properties in the next section.


\subsection{Prevalence Difference Metric}
\label{sec:pd-fair}
We define the \emph{prevalence difference} (PD) metric, for each group $z$, as: $\Delta_{\Testdata}^z:={\left|{\rho}_{\Testdata}^{z} - \hat{\rho}_{\Testdata}^{z}\right|}$, where, $\rho_{\Testdata}^z$ and $\hat{\rho}_{\Testdata}^z$ denote the true and predicted prevalences of the dataset $\Testdata$ (as defined in Equations~\ref{eq:TruePrev} and \ref{eq:PredPrev}, respectively). Hereafter, we drop the subscripts and superscripts on $\Delta$, $\rho$ and $\hat{\rho}$ whenever we refer to the population in aggregate.

Note that the true prevalence $\rho_{\Testdata}^z$ of test set {$\Testdata$} cannot be used during the prediction phase. Thus, replacing $\rho_{\Testdata}^z$ with $\hat{q}^z({\Testdata}^z)$ in the definition of $\Delta_{\Testdata}^z$ provides a measure to choose the best classifier $J^z$ for the group $z$. Also, unlike PD, other performance metrics like accuracy, FPR or FNR are not suitable for choosing the best classifier since these metrics require the \emph{true} labels of the test datasets. We use the PD metric for: (1)~choosing the best classifier in the prediction phase and (2)~measuring the performance of the predictions, since a high value of $\Delta^z$ implies the inability to account for prior probability shift for the group $z$.

The PD metric is somewhat different from the fairness metrics aiming to capture parity between two sub-populations. Such fairness metrics may often require sacrificing the performance on one group to maintain parity with the other group. However PD, in itself, believes that the two groups should be treated differently since each group may have gone through a different change of prior probabilities. A high $\Delta^z$ indicates high extent of harm caused by the predictions made towards to the group $z$. Thus, to audit the impact of a classifier's predictions on a group $z$, it is important to evaluate for $\Delta^z$, along with accuracy, FNR and FPR values within each group.

Next, we show that a \textit{perfect classifier} ($100\%$ accurate) attains zero prevalence difference. Additionally, we show that a classifier with high accuracy on any sub-group also attains a very low $\Delta$ for that subgroup. Empirically, we observe that low $\Delta$ results in PE-fair predictions.

\subsection{Theoretical Guarantees}\label{sec:theory}
We first show a simple result--- a classifier whose predictions are exactly the ground truth also attains $\Delta = 0$, thereby satisfying our proposed metric used for selecting the best classifier. Note that a perfect classifier may \emph{not} satisfy fairness notions such as disparate impact and statistical parity.

\begin{theorem}
A perfect classifier always exhibits $\Delta=0$.
\label{thm:perfect}
\end{theorem}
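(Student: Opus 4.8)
The plan is to unpack the definitions of true prevalence and prediction prevalence and show they coincide whenever the classifier is perfect. Recall that $\Delta_{\Testdata}^z = |\rho_{\Testdata}^z - \hat{\rho}_{\Testdata}^z|$, where by Equation~\ref{eq:TruePrev} the numerator of $\rho_{\Testdata}^z$ counts instances $(x_i, z_i, y_i) \in \Testdata$ with $y_i = 1$ and $z_i = z$, and by Equation~\ref{eq:PredPrev} the numerator of $\hat{\rho}_{\Testdata}^z$ counts instances with $\hat{y}_i = 1$ and $z_i = z$. Both fractions share the same denominator, namely the number of instances in $\Testdata$ with $z_i = z$.

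First I would invoke the definition of a perfect classifier: a classifier $\hat{h}$ is perfect (i.e., $100\%$ accurate) on $\Testdata$ precisely when $\hat{y}_i = y_i$ for every $(x_i, z_i, y_i) \in \Testdata$. Substituting $\hat{y}_i = y_i$ into the numerator of $\hat{\rho}_{\Testdata}^z$ immediately makes the set $\{(x_i,z_i,y_i)\in\Testdata : \hat{y}_i = 1, z_i = z\}$ equal to the set $\{(x_i,z_i,y_i)\in\Testdata : y_i = 1, z_i = z\}$, so the two numerators agree. Since the denominators are identical by construction, we get $\hat{\rho}_{\Testdata}^z = \rho_{\Testdata}^z$, hence $\Delta_{\Testdata}^z = |\rho_{\Testdata}^z - \hat{\rho}_{\Testdata}^z| = 0$. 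The same argument applies at the aggregate (population) level by dropping the superscript $z$, or equivalently by summing over all groups.

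Honestly, there is no real obstacle here; the statement is essentially a definitional unwinding, which matches the paper's own description of it as ``a simple result.'' The only minor point worth stating explicitly is that the denominator is well-defined (nonzero) whenever group $z$ is actually present in $\Testdata$, which is the implicit assumption throughout, so the quotient makes sense; if group $z$ is absent, $\Delta_{\Testdata}^z$ is vacuous and the claim holds trivially. I would keep the proof to two or three lines, emphasizing that perfect accuracy forces the predicted-positive set and the true-positive set to coincide within each group, and therefore the two prevalences—and their difference—vanish.
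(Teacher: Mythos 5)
Your proof is correct and follows essentially the same route as the paper's: invoke $\hat{y}_i = y_i$ for all instances, note that the numerators of $\rho_{\Testdata}^z$ and $\hat{\rho}_{\Testdata}^z$ (with common denominator) then coincide, and conclude $\Delta^z = |\rho^z - \hat{\rho}^z| = 0$. The added remarks about the nonzero denominator and the aggregate case are fine but not needed beyond what the paper already does.
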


\begin{proof}
    Let us consider a perfect classifier $C$ whose predictions are equal to the ground truth i.e., $\hat{y}(x)=y(x)$ for all instances $x\in\mathcal{X}$, where $\hat{y}(x)$ is the label predicted by the classifier $C$ for the instance $x$. Thus, for each $z$, the true prevalence $\rho^z$  is equal to the prediction prevalence $\hat{\rho}^z$, according to the definitions in Equations \ref{eq:TruePrev} and \ref{eq:PredPrev}. Thus, the prevalence difference $\Delta^z = |\rho^z - \hat{\rho}^z|=0$.
\end{proof}

\begin{theorem}\label{thm:app}
If the overall accuracy of a classifier $C$ is $(1-\delta)$, where $\delta\in(0,1)$ is a very small number, then the overall \textit{prevalence difference} for $C$ is $\Delta=\delta - 2\min\left\{\frac{\mathtt{FN}}{n}, \frac{\mathtt{FP}}{n}\right\}$, where $\mathtt{FN}$ and $\mathtt{FP}$ denote number of false negatives and false positives respectively in the test dataset with $n$ instances. This further implies that $\Delta\leq \delta$.
\end{theorem}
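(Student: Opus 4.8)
The plan is to express everything in terms of the confusion-matrix counts on the test set and then reduce the claim to a one-line identity about $|\mathtt{FN}-\mathtt{FP}|$. Write $\mathtt{TP},\mathtt{TN},\mathtt{FP},\mathtt{FN}$ for the number of true positives, true negatives, false positives and false negatives among the $n$ test instances (aggregated over all groups, since $\Delta$ here refers to the population in aggregate). The accuracy hypothesis gives $(\mathtt{TP}+\mathtt{TN})/n = 1-\delta$, hence the number of misclassified points is $\mathtt{FP}+\mathtt{FN} = \delta n$.

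Next I would rewrite the two prevalences using Equations~\ref{eq:TruePrev} and \ref{eq:PredPrev} at the aggregate level. The number of actual positives is $\mathtt{TP}+\mathtt{FN}$, so $\rho = (\mathtt{TP}+\mathtt{FN})/n$; the number of predicted positives is $\mathtt{TP}+\mathtt{FP}$, so $\hat{\rho} = (\mathtt{TP}+\mathtt{FP})/n$. Subtracting, the $\mathtt{TP}$ terms cancel and
\[
\Delta = |\rho - \hat{\rho}| = \frac{|\mathtt{FN} - \mathtt{FP}|}{n}.
\]

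The final step is the elementary identity $|a-b| = a + b - 2\min\{a,b\}$ applied to $a=\mathtt{FN}$, $b=\mathtt{FP}$: combined with $\mathtt{FP}+\mathtt{FN}=\delta n$ this yields $|\mathtt{FN}-\mathtt{FP}| = \delta n - 2\min\{\mathtt{FN},\mathtt{FP}\}$, so dividing by $n$ gives $\Delta = \delta - 2\min\{\mathtt{FN}/n,\mathtt{FP}/n\}$. Since $\min\{\mathtt{FN}/n,\mathtt{FP}/n\}\ge 0$, this immediately implies $\Delta \le \delta$, with equality exactly when all misclassifications are of a single type. Honestly, there is no real obstacle here — the only thing to be careful about is that the $\mathtt{TP}$ contributions to $\rho$ and $\hat{\rho}$ genuinely cancel (they do, because a true positive is both an actual positive and a predicted positive), and that we are reading the prevalence definitions at the aggregate level rather than per group; the small number $\delta$ hypothesis plays no role beyond framing, so I would not invoke it in the argument.
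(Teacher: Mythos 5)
Your proposal is correct and follows essentially the same route as the paper: both compute $\Delta = |\mathtt{FN}-\mathtt{FP}|/n$ from the confusion-matrix counts, use $(\mathtt{FN}+\mathtt{FP})/n=\delta$ from the accuracy hypothesis, and conclude $\Delta = \delta - 2\min\{\mathtt{FN}/n,\mathtt{FP}/n\}\le\delta$. The only cosmetic difference is that you invoke the identity $|a-b|=a+b-2\min\{a,b\}$ directly, whereas the paper carries out the same step by a two-case (without loss of generality) argument.
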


\begin{proof} 
    Let $(\hat{y}_i)_{i=1}^n$ denote the predictions of a classifier $C$ on a test dataset $\{(x_i, y_i)_{i=1}^n\}$. Some other notations that we use for the proof, are:\\
    $\mathtt{TP}:=\left| \{i: y_i=1\ \&\ \hat{y}_i=1\}\right|$ ($\#$ true positives).\\
    $\mathtt{TN}:={\left| \{i: (y_i=0\ \&\ \hat{y}_i=0\}\right|}$ ($\#$ true negatives).\\
    $\mathtt{FP}:={\left| \{i: y_i=0\ \&\ \hat{y}_i=1\}\right|}$ ($\#$ false positives).\\
    $\mathtt{FN}:={\left| \{i: y_i=1\ \&\ \hat{y}_i=0\}\right|}$ ($\#$ false negatives).
    
    \noindent Note that $\mathtt{TP}+\mathtt{TN}+\mathtt{FP}+\mathtt{FN} = n$. Let $\rho$ and $\hat{\rho}$ be the true and prediction prevalences. Then, the \textit{prevalence difference} can be written as:
    \begin{align}
    \Delta = |\rho - \hat{\rho}| =  \left|\frac{\mathtt{TP}+\mathtt{FN}}{n} - \frac{\mathtt{TP}+\mathtt{FP}}{n}\right| = \frac{|\mathtt{FN}-\mathtt{FP}|}{n} \label{eq:pd}
    \end{align}
    
    Let the accuracy of a classifier on a test dataset be ($1-\delta$) where $\delta\in(0,1)$. Then,
    \begin{align}
    \frac{\mathtt{TP}+\mathtt{TN}}{n}  =  1-\delta \quad
    \Rightarrow  \frac{\mathtt{FN}+\mathtt{FP}}{n} = \delta \label{eq:acc}
    \end{align}
    
    Without loss of generality, let us assume $\mathtt{FN}\geq \mathtt{FP}$. Thus, Equation~\ref{eq:acc} can be written as:
    \begin{align}
    \frac{\mathtt{FN}-\mathtt{FP} + 2\mathtt{FP}}{n} = \delta \quad \Rightarrow \frac{\mathtt{FN}-\mathtt{FP}}{n}  =  \delta - \frac{2\mathtt{FP}}{n} \label{eq:fn-fp}
    \end{align}
    Similarly, assuming $\mathtt{FP}\geq \mathtt{FN}$ we obtain
    \begin{equation}
    \frac{\mathtt{FP}-\mathtt{FN}}{n}  =  \delta - \frac{2\mathtt{FN}}{n}  \label{eq:fp-fn}
    \end{equation}
    
    Combining Equation \ref{eq:pd}, \ref{eq:fn-fp} and \ref{eq:fp-fn}, we get the following:
    \begin{align}
    &\Delta = \frac{|\mathtt{FP}-\mathtt{FN}|}{n}  =  \delta - 2\min\left\{\frac{\mathtt{FN}}{n}, \frac{\mathtt{FP}}{n}\right\}\nonumber\\
    \Rightarrow &\Delta\leq \delta. 
    \end{align}
    Thus, when accuracy is greater than $(1-\delta)$, the prevalence difference is at most $\delta$. This completes the proof.
 \end{proof}

Note that Theorem~\ref{thm:app} can also be used to guarantee that highly accurate predictions for a group $z$, implies a low value for $\Delta^z$. This leads to Corollary~\ref{cor:app}.

\begin{corollary}\label{cor:app}
If accuracy of a classifier for any sub-population $z$ is greater than $1-\delta$, then $\Delta^z\leq \delta$. 
\end{corollary}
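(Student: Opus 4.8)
The plan is to obtain Corollary~\ref{cor:app} as a direct specialization of Theorem~\ref{thm:app}, applied not to the whole test set $\Testdata$ but to the sub-dataset $\Testdata^z = \{(x_i, z_i, y_i) \in \Testdata : z_i = z\}$. The key observation is that the proof of Theorem~\ref{thm:app} is nothing more than a counting argument over the four confusion-matrix entries of whichever test set is under consideration; it never appeals to any structural property of the instances. So the entire chain of equalities in that proof goes through verbatim once every quantity is re-read as a group-$z$ quantity.

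Concretely, I would first introduce the group-restricted counts $\mathtt{TP}^z, \mathtt{TN}^z, \mathtt{FP}^z, \mathtt{FN}^z$ of true/false positives/negatives among the $n_z := |\Testdata^z|$ instances of group $z$, so that $\mathtt{TP}^z + \mathtt{TN}^z + \mathtt{FP}^z + \mathtt{FN}^z = n_z$. By Equations~\ref{eq:TruePrev} and \ref{eq:PredPrev},
\[
\Delta^z = |\rho^z - \hat{\rho}^z| = \left|\frac{\mathtt{TP}^z + \mathtt{FN}^z}{n_z} - \frac{\mathtt{TP}^z + \mathtt{FP}^z}{n_z}\right| = \frac{|\mathtt{FN}^z - \mathtt{FP}^z|}{n_z},
\]
the group-$z$ analogue of Equation~\ref{eq:pd}. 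Next, the hypothesis that the classifier's accuracy restricted to group $z$ exceeds $1-\delta$ gives $\frac{\mathtt{TP}^z + \mathtt{TN}^z}{n_z} > 1 - \delta$, hence $\frac{\mathtt{FN}^z + \mathtt{FP}^z}{n_z} < \delta$, the analogue of Equation~\ref{eq:acc}. Finally, running the same case split as in Theorem~\ref{thm:app} (WLOG $\mathtt{FN}^z \ge \mathtt{FP}^z$, and symmetrically in the other case) yields $\Delta^z = \frac{\mathtt{FN}^z + \mathtt{FP}^z}{n_z} - \frac{2\min\{\mathtt{FN}^z, \mathtt{FP}^z\}}{n_z} \le \frac{\mathtt{FN}^z + \mathtt{FP}^z}{n_z} < \delta$, which is the claimed bound.

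There is essentially no technical obstacle here; the only point requiring care is the reading of ``accuracy of a classifier for any sub-population $z$'': it must mean accuracy evaluated on $\Testdata^z$ (with denominator $n_z$), not the overall accuracy on $\Testdata$. Under that reading the corollary is immediate, since Theorem~\ref{thm:app} applies unchanged to the test set $\Testdata^z$; under the other reading the statement would be false. I would therefore make the intended interpretation explicit in the write-up and otherwise simply cite Theorem~\ref{thm:app} applied to $\Testdata^z$.
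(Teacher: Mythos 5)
Your proposal is correct and matches the paper's intent exactly: the corollary is obtained by applying the counting argument of Theorem~\ref{thm:app} to the group-restricted test set $\Testdata^z$ with the confusion-matrix counts and accuracy read within group $z$, which is precisely how the paper derives it. Your explicit note on interpreting ``accuracy for sub-population $z$'' as accuracy on $\Testdata^z$ is a reasonable clarification but not a departure from the paper's argument.
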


\noindent The following theorem gives insight on why $\cape$ works. In the subsequent discussion, we drop the parameter $\Testdata$ from the notations $\hat{q}$ and $\rho$ and $\hat{\rho}$ since we exclusively refer to these values in the context of the test dataset $\Testdata$ only.
\begin{theorem} \label{claim:cape}
Let $\Theta = \{\frac{\epsilon}{2}, \frac{3\epsilon}{2}, \frac{5\epsilon}{2} \ldots, \left(k-\frac{1}{2}\right)\epsilon\}$ where $\epsilon\in(0,1)$ and $k=\left\lfloor\frac{1}{\epsilon}+\frac{1}{2}\right\rfloor$. For a group $z$, and test dataset $\Testdata$, let the quantifier be such that $|\rho^z - \hat{q}^z| \leq \delta_1$, and the classifiers be such that $|\theta_j - \hat{\rho}^z_{j}| \leq \delta_2$ for all $j\in\{1,\ldots,k\}$, for small $\delta_1$ and $\delta_2$. Then, for the best classifier $$J := \displaystyle \argmin_{j \in \{1, \ldots, k\}} |\predPrev^z_{j} - \hat{q}^z|,$$ the following holds:
$$|\rho^z - \hat{\rho}^z_{J}| \leq \delta_1 + \delta_2 + \frac{\epsilon}{2}.$$
\end{theorem}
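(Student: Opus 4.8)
The plan is to prove the bound by a chain of triangle inequalities that pivots on the quantifier output $\hat q^z$, using the fact that the prescribed grid $\Theta$ is an $\tfrac{\epsilon}{2}$-net of the unit interval. First I would record the relevant property of $\Theta$: the points $\tfrac{\epsilon}{2}, \tfrac{3\epsilon}{2}, \ldots, (k-\tfrac12)\epsilon$ are exactly the midpoints of the $k$ consecutive cells $[0,\epsilon],[\epsilon,2\epsilon],\ldots,[(k-1)\epsilon,k\epsilon]$, and the choice $k=\lfloor\tfrac1\epsilon+\tfrac12\rfloor$ makes these cells cover (essentially all of) $[0,1]$. Hence for every $p\in[0,1]$ there is an index $j(p)\in\{1,\ldots,k\}$ with $|p-\theta_{j(p)}|\le\tfrac{\epsilon}{2}$.

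Next I would apply this to $p=\hat q^z$, which lies in $[0,1]$ since it is the output of a quantifier $\hat q^z:\mathcal{X}^{\mathbb N}\mapsto[0,1]$. Set $j^\star:=j(\hat q^z)$, so $|\hat q^z-\theta_{j^\star}|\le\tfrac{\epsilon}{2}$. Combining this with the classifier hypothesis $|\theta_{j^\star}-\hat\rho^z_{j^\star}|\le\delta_2$ gives
\[
|\hat\rho^z_{j^\star}-\hat q^z|\ \le\ |\hat\rho^z_{j^\star}-\theta_{j^\star}|+|\theta_{j^\star}-\hat q^z|\ \le\ \delta_2+\frac{\epsilon}{2}.
\]
Then I would invoke the defining property of $J$: since $J$ minimizes $|\hat\rho^z_j-\hat q^z|$ over $j\in\{1,\ldots,k\}$ and $j^\star$ is one admissible choice, $|\hat\rho^z_J-\hat q^z|\le|\hat\rho^z_{j^\star}-\hat q^z|\le\delta_2+\tfrac{\epsilon}{2}$. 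A final triangle inequality through $\hat q^z$, using the quantifier hypothesis $|\rho^z-\hat q^z|\le\delta_1$, yields
\[
|\rho^z-\hat\rho^z_J|\ \le\ |\rho^z-\hat q^z|+|\hat q^z-\hat\rho^z_J|\ \le\ \delta_1+\delta_2+\frac{\epsilon}{2},
\]
which is the claimed inequality.

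The step needing the most care is the $\tfrac{\epsilon}{2}$-net claim, together with the (crucial) decision to anchor the auxiliary index $j^\star$ to the \emph{estimated} prevalence $\hat q^z$ rather than to the true prevalence $\rho^z$; anchoring to $\rho^z$ would force a second application of the quantifier-error bound and degrade the constant to $2\delta_1+\delta_2+\tfrac{\epsilon}{2}$. I would also flag a mild boundary subtlety: when $\tfrac1\epsilon$ is not of a convenient form, $k\epsilon$ can fall marginally short of $1$, so a $\hat q^z$ extremely close to $1$ (or $0$) might sit slightly more than $\tfrac{\epsilon}{2}$ from the nearest grid point; this is a negligible rounding effect that can be absorbed by a cosmetic adjustment of $k$ or by restricting to the range of prevalences that actually occur, and it does not affect the argument in substance.
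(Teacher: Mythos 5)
Your proof is correct and follows the same skeleton as the paper's: pivot on the quantifier output $\hat{q}^z$ via the triangle inequality, use that the grid $\Theta$ places some $\theta_{j^\star}$ within $\epsilon/2$ of $\hat{q}^z$, and invoke the accuracy assumptions $|\rho^z-\hat{q}^z|\le\delta_1$ and $|\theta_j-\hat{\rho}^z_j|\le\delta_2$. The difference is in how the middle quantity $|\hat{q}^z-\hat{\rho}^z_J|$ is bounded: the paper fixes the same auxiliary index (its $J'$) but then runs a case-by-case analysis over the relative positions of $\theta_{J'}$, $\hat{\rho}^z_{J'}$, $\hat{q}^z$ and $\hat{\rho}^z_J$, obtaining $\delta_2+\epsilon/2$ as the worst case across roughly half a dozen subcases; you instead observe that the minimality of $J$ gives $|\hat{\rho}^z_J-\hat{q}^z|\le|\hat{\rho}^z_{j^\star}-\hat{q}^z|$ outright, and then a single triangle inequality through $\theta_{j^\star}$ yields $\delta_2+\epsilon/2$ uniformly. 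This is a genuine streamlining (the paper only exploits the best-classifier property inside certain subcases), and it makes the role of each hypothesis transparent. Your remark about anchoring $j^\star$ to $\hat{q}^z$ rather than $\rho^z$ correctly identifies why the constant is $\delta_1+\delta_2+\epsilon/2$ and not $2\delta_1+\delta_2+\epsilon/2$. Finally, your boundary caveat is a real (if minor) point the paper glosses over: with $k=\lfloor 1/\epsilon+1/2\rfloor$ the largest grid point $(k-\tfrac12)\epsilon$ can fall short of $1-\epsilon/2$, so a $\hat{q}^z$ very close to $1$ may be up to $\epsilon$ (not $\epsilon/2$) from the grid; the paper simply asserts the $\epsilon/2$ bound, so flagging and absorbing this rounding issue is, if anything, more careful than the original.
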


\begin{proof}
For the best classifier $J$, the prevalence difference of a group $z$ can be upper bounded using triangle inequality:
\begin{eqnarray}
|\rho^z-\hat{\rho}^z_J|& \leq & |\rho^z-\hat{q}^z| + |\hat{q}^z - \hat{\rho}^z_J| \nonumber \\
& \leq & \delta_1 + |\hat{q}^z - \hat{\rho}^z_J| \label{eq:pdJ}
\end{eqnarray}
Inequality~(\ref{eq:pdJ}) is implied by the assumption on the quantifier's performance, i.e., $|\rho^z - \hat{q}^z| \leq \delta_1$. 
To provide an upper bound for $|\hat{q}^z - \hat{\rho}^z_J|$, we pick $J'$ such that $$J' = \displaystyle \argmin_{j \in \{1, \ldots, k\}} |\hat{q}^z - \theta_{J'}|,\quad\quad \mbox{ where } \theta_{J'}=\left(J'-\frac{1}{2}\right)\epsilon$$ 
Since $\hat{q}^z \in [0,1]$, it is at most $\epsilon/2$ away from one of the fractional values in $\{\frac{\epsilon}{2}, \frac{3\epsilon}{2}, \frac{5\epsilon}{2} \ldots, \left(k-\frac{1}{2}\right)\epsilon\}$. Therefore,
\begin{equation}
|\hat{q}^z - \theta_{J'}|\leq \epsilon/2\label{eq:q-theta}
\end{equation}
We use Inequality~(\ref{eq:q-theta}) to provide an upper bound to the expression $|\hat{q}^z - \hat{\rho}^z_J|$, using  case-by-case analysis.\\

\noindent Case~$1$: Assume $\hat{q}^z<\hat{\rho}^z_J$. This leaves us with three possibilities for the value of $\theta_{J'}$:
	\begin{enumerate}
	\item Assume $\theta_{J'}\geq\hat{\rho}^z_J$. Then, 
	\begin{eqnarray}
	\hat{\rho}^z_J- \hat{q}^z\leq \theta_{J'}- \hat{q}^z \leq \epsilon/2
	\end{eqnarray}
	\item Assume $\hat{q}^z\leq\theta_{J'}<\hat{\rho}^z_J$. Now, we bound the desired quantity using the value of $\hat{\rho}_{J'}$. Note that $|\hat{q}^z-\hat{\rho}_{J}|\leq |\hat{q}^z-\hat{\rho}_{J'}|$ since $J$ is the best classifier. Thus, either $\hat{\rho}_{J'}\geq \hat{\rho}_{J}$ or $\hat{\rho}_{J'}\leq \hat{q}^z$. 
		\begin{enumerate}
		\item Assume $\hat{\rho}_{J'}\leq\hat{q}^z$. Then, 
		\begin{eqnarray}
		|\hat{q}^z - \hat{\rho}^z_J| \leq  \hat{q}^z - \hat{\rho}^z_{J'} \leq  \theta_{J'} - \hat{\rho}^z_{J'}
		\leq \delta_2
		\end{eqnarray}
		
		\item Assume $\hat{\rho}_{J'}\geq\hat{\rho}_J$. Then, 
		\begin{eqnarray}
		|\hat{q}^z - \hat{\rho}^z_J|
		&\leq& (\hat{\rho}^z_{J'} - \hat{q}^z)\nonumber\\
		&=& (\hat{\rho}^z_{J'} - \theta_{J'}) + (\theta_{J'} - \hat{q}^z)\nonumber\\
		&\leq &\delta_2 + \epsilon/2
		\end{eqnarray}	
		\end{enumerate}
		\item Assume $\theta_{J'}<\hat{q}^z$. Now, we bound the desired quantity using the value of $\hat{\rho}_{J'}$, and there can be three cases. 
		\begin{enumerate}
		\item Assume $\hat{\rho}_{J'}\leq \theta_{J'}$. Then, 
		\begin{eqnarray}
		|\hat{q}^z - \hat{\rho}^z_J| &\leq & \hat{q}^z - \hat{\rho}^z_{J'}\nonumber\\
		&\leq &  (\hat{q}^z - \theta_{J'})+(\theta_{J'} - \hat{\rho}^z_{J'})\nonumber\\
		& \leq & \epsilon/2 + \delta_2
		\end{eqnarray}
		
		\item Assume $\theta_{J'}<\hat{\rho}_{J'}\leq\hat{q}^z$. Then, 
		\begin{eqnarray}
		|\hat{q}^z - \hat{\rho}^z_J|
		\leq \hat{q}^z - \hat{\rho}^z_{J'} 
		&\leq & \hat{q}^z - \theta_{J'}\nonumber\\
		&\leq &\epsilon/2
		\end{eqnarray}
		
		\item Assume $\hat{\rho}_{J'}>\hat{\rho}_{J}$. Then, 
		\begin{eqnarray}
		|\hat{q}^z - \hat{\rho}^z_J|
		\leq \hat{\rho}^z_{J'} -\theta_{J'}
		\leq \delta_2
		\end{eqnarray}
	\end{enumerate}
\end{enumerate}
Inequalities~$(10)$-$(15)$ establish the following upper bound when $\hat{q}^z< \hat{\rho}^z_J$,
\begin{equation}
|\hat{q}^z - \hat{\rho}^z_J| \leq \delta_2 + \epsilon/2.\label{eq:q-rho}
\end{equation}
Case $2$: $\hat{q}^z\geq \hat{\rho}^z_J$. An analysis analogous to Case $1$ gives the same inequality as (\ref{eq:q-rho}).
Combining Inequalities~(\ref{eq:pdJ}) and (\ref{eq:q-rho}), we obtain the desired upper bound of $\delta_1 + \delta_2 +\epsilon/2$ on the quantity $|\rho^z-\hat{\rho}^z|$.
\end{proof}

%
%

%
%
%
\section{Experimental Evaluation}
\label{sec:eval}

We first evaluate $\cape$ on synthetically generated datasets. We then compare it with other fair classifiers on the real-world COMPAS \cite{url:compas} and MEPS \cite{url:meps} datasets, where we observe possible prior-probability shifts. 
$\cape$ is open source but the link is retracted for anonymity.
The performance of $\cape$ on a wide range of fairness-metrics, across all these datasets, enforces our proposal that $\cape$ should be used for predictions under prior-probability shifts.

\subsection{Datasets}
\label{sec:datasets}

\noindent \underline{Synthetic}: We assume a generative model with $3$ features---sensitive attribute $Z \in \{0,1\}$, and two additional attributes $U$ and $V$---along with the label $Y\in \{0,1\}$. We assume that the overall population distribution is generated as 
$\mathcal{P}(U,V,Z,Y)=\mathcal{P}(U,V|Z,Y)\cdot \mathcal{P}(Z|Y)\cdot \mathcal{P}(Y)$. We further consider equal representation of the two population subgroups, i.e., $\mathcal{P}(Z$=$1|Y)=\mathcal{P}(Z$=$0|Y)$ for each $Y \in \{0,1\}$. $U$ and $V$ are conditionally independent: $\mathcal{P}(U,V|Z,Y)\!\!=\!\! \mathcal{P}(U,V|Y)\!\! =\!\! \mathcal{P}(U|Y)\cdot\mathcal{P}(V|Y)$, and the distributions are considered to be Gaussian ($\mathcal{N}$) with the following parameters: $\mathcal{P}(U|Y$=$1)\sim \mathcal{N}(15, 10)$, $\mathcal{P}(U|Y$=$0)\sim \mathcal{N}(5,5)$, $\mathcal{P}(V|Y$=$1)\sim \mathcal{N}(20, 10)$, and $\mathcal{P}(V|Y$=$0)\sim \mathcal{N}(40,10)$.

%

We generate $50000$ instances for the training dataset $D$ with equal label distribution, i.e., $\rho^z_D = 0.5$. However, while generating the test set, the prevalence parameters $\rho^z_{\Testdata}$ are different. We generated $81$ different types of test datasets, each obtained by varying the prevalences for both subgroups $z\in\{0,1\}$, such that $\rho_{\Testdata}^z \in\{0.1, \ldots, 0.9\}$.\\ 

\noindent \underline{{COMPAS}} dataset contains demographic information and criminal history for pre-trial defendants in Broward County, Florida. 
The goal of learning is to predict whether an individual re-offends. We consider $\mathtt{is\_recid}$ as $Y$ labels and $\mathtt{race}$ as the sensitive attribute ($Z$=$1$ denotes African-Americans, while $Z$=$0$ denotes Caucasians). We pre-processed the dataset to remove rows containing missing or invalid information. Our training dataset comprises $4278$ records whose screening dates were in the year $2013$ (of which $59.70\%$ are African-Americans), while the test dataset comprises $1809$ records screened in the year $2014$ (of which $60.86\%$ are African-Americans). 
\\

\begin{figure*}[hb]
    \centering
    \begin{subfigure}{.33\textwidth}
      \centering
      \includegraphics[width=0.9\textwidth]{./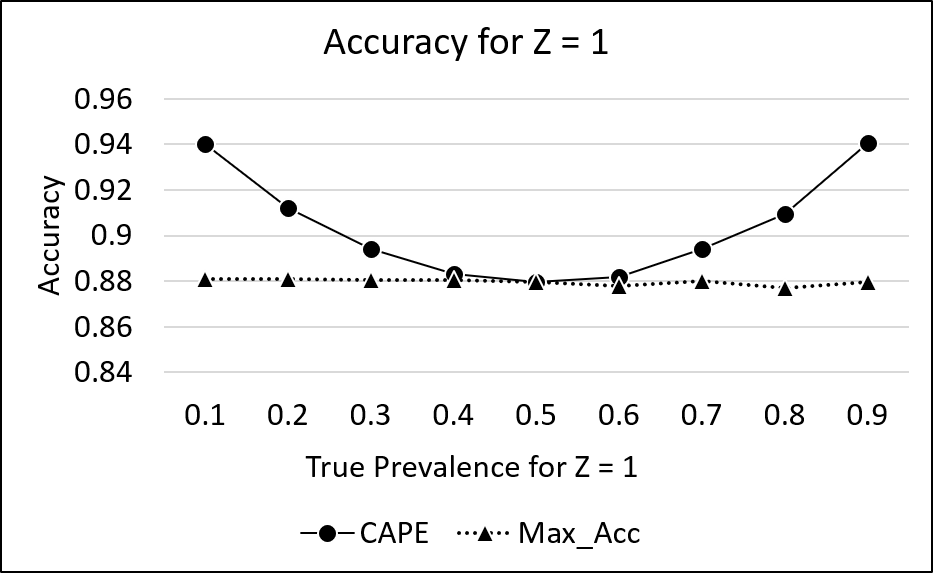}
      \captionof{figure}{Accuracy of $Z=1$.}
      \label{fig:synth-acc}
    \end{subfigure}%
    \begin{subfigure}{.33\textwidth}
      \centering
      \includegraphics[width=0.9\textwidth]{./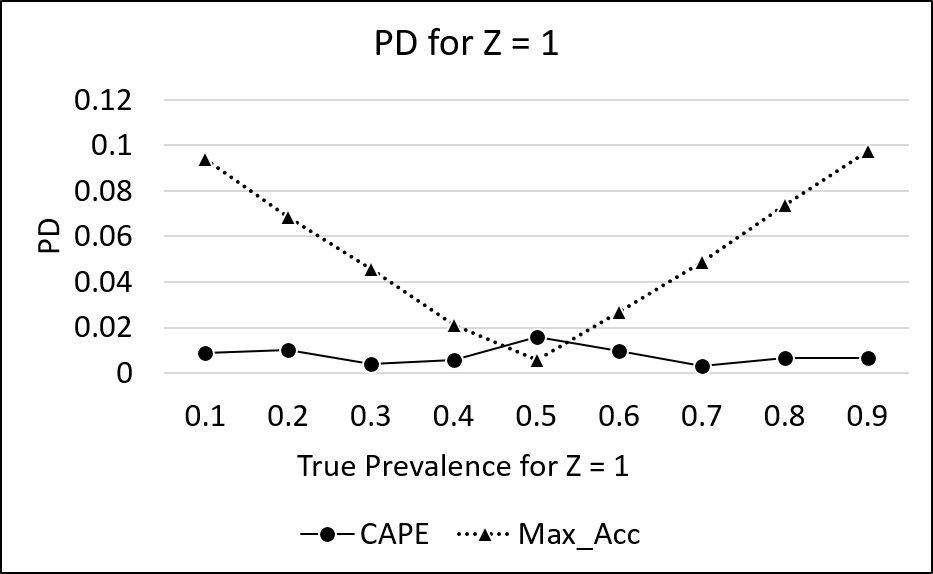}
      \captionof{figure}{Prevalence Difference for $Z=1$.}
      \label{fig:synth-pd}
    \end{subfigure}%
    \begin{subfigure}{.33\textwidth}
        \centering
        \includegraphics[width=0.9\textwidth]{./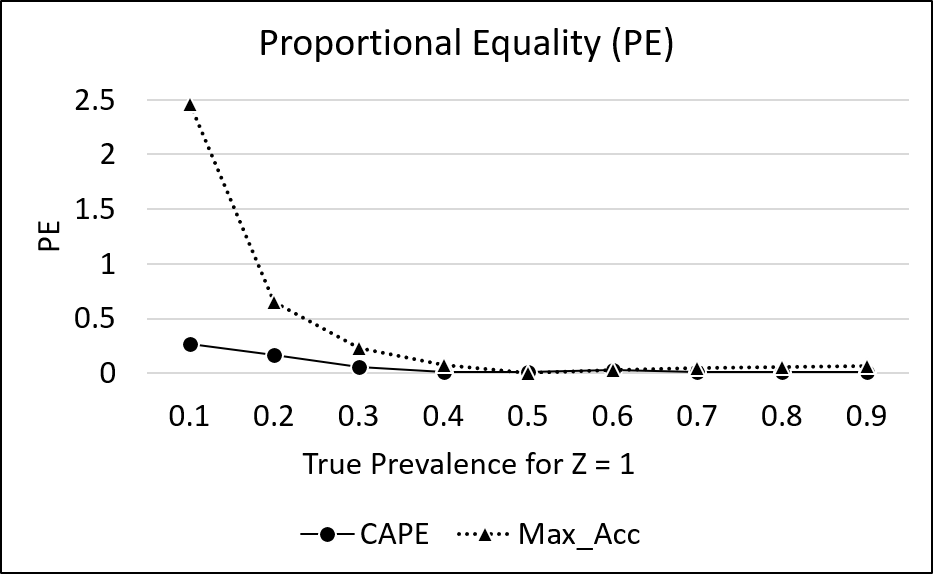}
        \captionof{figure}{Proportional Equality (PE$^{0,1}$).}
        \label{fig:synth-pe}
    \end{subfigure}
    \caption{\label{fig:synth-summary}Comparing accuracy, PD and PE metrics on synthetic test datasets with varying prevalences for group $Z$=$1$. The prevalence for group $Z$=$0$ is fixed at $0.5$. The reported results are averaged over $20$ iterations and the standard deviation is of the order $10^{-3}$.}
\end{figure*}

\noindent \underline{MEPS} comprises surveys carried out on individuals, health care professions, and employers in the United States. The feature $\mathtt{UTILIZATION}$ measures the total number of trips involved in availing some sort of medical facility. 
The classification task involves predicting whether $\mathtt{UTILIZATION} \ge 10$. We consider $\mathtt{RACE}$ as the sensitive attribute ($Z$=$1$ denotes `Non-Whites'). The surveys for the year $2015$ is our training set (with $33400$ data points, of which $62.86\%$ are `Non-Whites'), and the surveys for $2016$ is our test set (with $32006$ data points, of which $61.72\%$ are `Non-Whites'). 

\subsection{Other Algorithms for Comparison}
\label{sec:other-fair-algo}
We compare $\cape$ against an accuracy-maximizing classifier, \textbf{\texttt{Max\_Acc}}. It uses the same algorithm used by $\cape$ in the module $\clalgo$. On the real-world datasets, we additionally compare $\cape$ with the following (in-, pre- and post-processing) fair algorithms, implemented in the IBM AI Fairness 360~\cite{aif360-oct-2018} toolkit---Reweighing (\textbf{\texttt{Reweigh}})~\cite{kamiran2012data}, Adversarial Debiasing (\textbf{\texttt{AD}})~\cite{zhang2018mitigating}, and variants of \textbf{\texttt{Meta\_fair}}~\cite{celis2018classification}, Calibrated Equalized Odds Postprocessing (\textbf{\texttt{CEOP}})~\cite{pleiss2017fairness}, Reject Option Classification (\textbf{\texttt{ROC}})~\cite{kamiran2012decision}. 
These algorithms target fairness notions other than PE. We evaluate the extent to which these algorithms achieve PE fairness and compare how they perform on a set of other metrics (such as FPR-diff, FNR-diff, Accuracy-diff, and PD). While $\cape$ can handle multiple sensitive attributes, we choose one sensitive attribute for all the datasets to stay consistent with the implementation in the IBM AIF360 toolkit.

\subsection{Parameters and Modules used for $\cape$}
\label{sec:cape-params}
\begin{itemize}[leftmargin=*]
\item Prevalences: We set $\Theta=\{0.05, 0.15,\ldots,0.95\}$.
\item $\ppsampling$: As described in Section~\ref{sec:cape}.
\item $\qalgo$: \textit{Scaled Probability Average}~\cite{bella2010quantification}. 
\item $\clalgo$: As the synthetically generated datasets are created using simple generative models, we use generalized logistic regression ($\mathtt{glm}$) with regularization. For COMPAS and MEPS, we use gradient boosted algorithm ($\mathtt{gbm}$) and $10$-fold cross-validation for hyper-parameter tuning. 
\end{itemize}

\subsection{Results}
\noindent\textbf{Synthetic dataset}: We evaluated $\cape$ with $81$ types of test datasets, each with $\rho_\Testdata^z \in \{0.1, \ldots, 0.9\}$ for $z \in \{0,1\}$. The general trend we observe is that $\cape$ outperforms $\BClass$ whenever there is a significant shift in prior probabilities. We report two interesting sets of results here.

First, we consider test datasets with $\rho_{\Testdata}^0=0.5$, and $\rho_{\Testdata}^1$ ranging between $0.1$ and $0.9$. Figure~\ref{fig:synth-summary} summarizes our findings. Since $\cape$ accounts for prevalence changes, the accuracy of $\cape$ on $\Testdata$ for group $Z$=$1$ (Figure~\ref{fig:synth-acc}) is consistently higher than $\BClass$, except for the dataset with $\rho_\Testdata^1 = 0.5$ where the accuracies become nearly equal. 
The prevalence difference for $Z$=$1$ (Figure~\ref{fig:synth-pd}) is lower for $\cape$ whenever there is a prior probability shift (i.e., when $\rho_\Testdata^1 \neq 0.5$). In fact, for $\BClass$, $\predPrev_\Testdata^1$ remains $0.5$ across all the test datasets. Thus, $\Delta^1_{\Testdata}$ for $\BClass$ increases linearly as $\rho_\Testdata^1$ moves away from $0.5$. Lastly, the predictions of $\cape$ consistently exhibit a lower valuation for PE (Figure~\ref{fig:synth-pe}), compared to $\BClass$. This highlights that the predictions of $\cape$ are more fair, compared to the purely accuracy maximizing $\BClass$.

Second, in Table~\ref{table:synth-pps}, we report results for scenarios where \emph{both} $\rho_\Testdata^0$ and $\rho_\Testdata^1$ significantly deviate from their corresponding prevalences in the training set. The results are representative of the general trend we observed in the other test datasets---$\cape$ outperforms $\BClass$ on accuracy, PD and PE metrics.

\newcommand{\colspacing}{\hspace{1.8em}}
    \begin{table}[ht]
    \scriptsize
    \centering
    \begin{tabular}{clrrrrrr}
    \centering

& & \multicolumn{2}{c}{\textbf{Accuracy}} & \multicolumn{2}{c}{$\mathbf{\Delta}$} & \multicolumn{2}{c}{\textbf{PE}$^{0,1}$} \\
\cmidrule(lr){3-4}
\cmidrule(lr){5-6}
\cmidrule(lr){7-8}

\textbf{Z}
& $\rho_{\Testdata}^z$
& \multicolumn{1}{c}{$\cape$}
& \multicolumn{1}{c}{$\BClass$}
& \multicolumn{1}{c}{$\cape$}
& \multicolumn{1}{c}{$\BClass$}
& \multicolumn{1}{c}{$\cape$}
& \multicolumn{1}{c}{$\BClass$} \\ [0.3em]

\toprule

\multicolumn{1}{c}{0}
& \multicolumn{1}{c}{0.1}
& \multicolumn{1}{r}{0.940}
& \multicolumn{1}{r}{0.880}
& \multicolumn{1}{r}{0.009}
& \multicolumn{1}{r}{0.094}
& \multirow{2}{*}{0.050}
& \multirow{2}{*}{0.104}
\\

\multicolumn{1}{c}{1}
& \multicolumn{1}{c}{0.1}
& \multicolumn{1}{r}{0.930}
& \multicolumn{1}{r}{0.855}
& \multicolumn{1}{r}{0.016}
& \multicolumn{1}{r}{0.110}
& 
& 
\\[0.3em]

\midrule

\multicolumn{1}{c}{0}
& \multicolumn{1}{c}{0.2}
& \multicolumn{1}{r}{0.894}
& \multicolumn{1}{r}{0.855}
& \multicolumn{1}{r}{0.017}
& \multicolumn{1}{r}{0.084}
& \multirow{2}{*}{0.012}
& \multirow{2}{*}{0.140}
\\

\multicolumn{1}{c}{1}
& \multicolumn{1}{c}{0.8}
& \multicolumn{1}{r}{0.909}
& \multicolumn{1}{r}{0.877}
& \multicolumn{1}{r}{0.006}
& \multicolumn{1}{r}{0.074}
& 
& 
\\[0.3em]

\midrule 

\multicolumn{1}{c}{0}
& \multicolumn{1}{c}{0.9}
& \multicolumn{1}{r}{0.929}
& \multicolumn{1}{r}{0.851}
& \multicolumn{1}{r}{0.012}
& \multicolumn{1}{r}{0.120}
& \multirow{2}{*}{0.003}
& \multirow{2}{*}{0.028}
\\

\multicolumn{1}{c}{1}
& \multicolumn{1}{c}{0.9}
& \multicolumn{1}{r}{0.940}
& \multicolumn{1}{r}{0.879}
& \multicolumn{1}{r}{0.006}
& \multicolumn{1}{r}{0.097}
& 
&
\\[0.3em]

\bottomrule

    \end{tabular}
    \caption{\label{table:synth-pps}Accuracy, $\Delta$ and PE values on the synthetic datasets when test set $\Testdata$ is such that $\rho_\Testdata^z\neq0.5$, for both groups $z \in \{0, 1\}$.}
    \end{table}


\noindent\textbf{Real-world datasets}: 
For COMPAS, columns $3$ and $4$ of Table~\ref{table:quant} highlight that the true prevalences of the training (year $2013$) and test (year $2014$) datasets are significantly different. This is indicative of a possible prior probability shift. Column $5$ shows that the $\qalgo$ module of $\cape$ makes a good estimate of the true prevalences of the test dataset. 

    \begin{table}[h!]
        \scriptsize
        \centering
        \begin{tabular}{ccccc}
        \centering

& \textbf{\begin{tabular}[c]{@{}c@{}}$\ $\\ $ $\\ \textbf{Z} \\ \end{tabular}}
&  \textbf{\begin{tabular}[c]{@{}c@{}}Training Data\\ True Prevalence\\ ${\rho_{D}^z}$ \\ \end{tabular}}
& \textbf{\begin{tabular}[c]{@{}c@{}}Test Data\\ True Prevalence\\ ${\rho_{\Testdata}^z}$ \\ \end{tabular}}
& \textbf{\begin{tabular}[c]{@{}c@{}}Quantifier's\\ Estimate\\ ${\hat{q}^(\Testdata^z)}$ \\ \end{tabular}} \\
\toprule

\multirow{2}{*}{\textbf{COMPAS}}
& \multicolumn{1}{c}{\textbf{0}}
& \multicolumn{1}{c}{0.327}
& \multicolumn{1}{c}{0.636}
& \multicolumn{1}{c}{0.592}

\\

& \multicolumn{1}{c}{\textbf{1}}
& \multicolumn{1}{c}{0.486}
& \multicolumn{1}{c}{0.706}
& \multicolumn{1}{c}{0.644}

\\[0.3em]

\midrule

\multirow{2}{*}{\textbf{MEPS}}
& \multicolumn{1}{c}{\textbf{0}}
& \multicolumn{1}{c}{0.253}
& \multicolumn{1}{c}{0.253}
& \multicolumn{1}{c}{0.273}

\\

& \multicolumn{1}{c}{\textbf{1}}
& \multicolumn{1}{c}{0.124}
& \multicolumn{1}{c}{0.117}
& \multicolumn{1}{c}{0.123} \\[0.3em]

\bottomrule

        \end{tabular}
        \caption{Column $3$ and $4$ show possible prior probability shifts in COMPAS and MEPS. Column $5$ highlights the prevalence estimates obtained by $\qalgo$ module of $\cape$ on the test datasets. \label{table:quant}}
        \end{table}
        
\begin{table*}[ht]
\scriptsize
\centering
\begin{tabular}{cclrrrrrrrrrrrrrr}
\centering

& & & \multicolumn{3}{c}{\textbf{FPR}} & \multicolumn{3}{c}{\textbf{FNR}} & \multicolumn{3}{c}{\textbf{Accuracy}} & \multicolumn{2}{c}{\begin{tabular}{@{}c@{}}\textbf{Prediction} \\ \textbf{Prevalences}\end{tabular}} & \multicolumn{2}{c}{$\mathbf{\Delta}$}\\
    \cmidrule(lr){4-6}
    \cmidrule(lr){7-9}
    \cmidrule(lr){10-12}
    \cmidrule(lr){13-14}
    \cmidrule(lr){15-16}
    
    &
    & \textbf{Algorithms}
    & \multicolumn{1}{c}{\textbf{Z = 0}}
    & \multicolumn{1}{c}{\textbf{Z = 1}}
    & \multicolumn{1}{c}{\textbf{diff}}
    & \multicolumn{1}{c}{\textbf{Z = 0}}
    & \multicolumn{1}{c}{\textbf{Z = 1}}
    & \multicolumn{1}{c}{\textbf{diff}}
    & \multicolumn{1}{c}{\textbf{Z = 0}}
    & \multicolumn{1}{c}{\textbf{Z = 1}}
    & \multicolumn{1}{c}{\textbf{diff}}
    & \multicolumn{1}{c}{\textbf{Z = 0}}
    & \multicolumn{1}{c}{\textbf{Z = 1}}
    & \multicolumn{1}{c}{\textbf{Z = 0}}
    & \multicolumn{1}{c}{\textbf{Z = 1}}
    & \multicolumn{1}{c}{\textbf{PE}$^{0,1}$} \\[0.3em]
    
    \toprule
    
    \parbox[c]{2mm}{\multirow{15}{*}{\rotatebox[origin=c]{90}{\ \ \small \textbf{COMPAS}}}}
    & \parbox[c]{2mm}{\multirow{3}{*}{\rotatebox[origin=c]{90}{\ \ $\cape$}}}
    & $\cape$-$\Testdata$
    & \multicolumn{1}{r}{0.461}
    & \multicolumn{1}{r}{0.380}
    & \multicolumn{1}{r}{0.081}
    & \multicolumn{1}{r}{0.302}
    & \multicolumn{1}{r}{0.275}
    & \multicolumn{1}{r}{0.027}
    & \multicolumn{1}{r}{0.640}
    & \multicolumn{1}{r}{0.694}
    & \multicolumn{1}{r}{0.054}
    & \multicolumn{1}{r}{0.612}
    & \multicolumn{1}{r}{0.623}
    & \multicolumn{1}{r}{0.024}
    & \multicolumn{1}{r}{0.083}
    & \multicolumn{1}{r}{0.082}
    \\
    
    &
    & $\cape$-1
    & \multicolumn{1}{r}{0.271}
    & \multicolumn{1}{r}{0.290}
    & \multicolumn{1}{r}{0.019}
    & \multicolumn{1}{r}{0.451}
    & \multicolumn{1}{r}{0.322}
    & \multicolumn{1}{r}{0.129}
    & \multicolumn{1}{r}{0.614}
    & \multicolumn{1}{r}{0.687}
    & \multicolumn{1}{r}{0.073}
    & \multicolumn{1}{r}{0.448}
    & \multicolumn{1}{r}{0.564}
    & \multicolumn{1}{r}{0.188}
    & \multicolumn{1}{r}{0.142}
    & \multicolumn{1}{r}{0.119}
    \\[0.3em]
    
    
    \cmidrule(lr){2-17}
    
    &
    & $\BClass$
    & \multicolumn{1}{r}{0.132}
    & \multicolumn{1}{r}{0.259}
    & \multicolumn{1}{r}{0.127}
    & \multicolumn{1}{r}{0.629}
    & \multicolumn{1}{r}{0.340}
    & \multicolumn{1}{r}{0.289}
    & \multicolumn{1}{r}{0.552}
    & \multicolumn{1}{r}{0.684}
    & \multicolumn{1}{r}{0.132}
    & \multicolumn{1}{r}{0.284}
    & \multicolumn{1}{r}{0.542}
    & \multicolumn{1}{r}{0.352}
    & \multicolumn{1}{r}{0.163}
    & \multicolumn{1}{r}{0.376}
    \\[0.3em]
    
    \cmidrule(lr){2-17} 
    
    &\parbox[c]{2mm}{\multirow{2}{*}{\rotatebox[origin=c]{90}{\ \ {\textbf{Pre}}}}}
    
    & $\mathtt{Reweigh}$
    & \multicolumn{1}{r}{0.283}
    & \multicolumn{1}{r}{0.139}
    & \multicolumn{1}{r}{0.144}
    & \multicolumn{1}{r}{0.493}
    & \multicolumn{1}{r}{0.543}
    & \multicolumn{1}{r}{0.050}
    & \multicolumn{1}{r}{0.583}
    & \multicolumn{1}{r}{0.576}
    & \multicolumn{1}{r}{0.007}
    & \multicolumn{1}{r}{0.425}
    & \multicolumn{1}{r}{0.363}
    & \multicolumn{1}{r}{0.211}
    & \multicolumn{1}{r}{0.343}
    & \multicolumn{1}{r}{0.271} \\ [0.4em]
    
    \cmidrule(lr){2-17}
    
    & \parbox[c]{2mm}{\multirow{4}{*}{\rotatebox[origin=c]{90}{\ \ \textbf{In}}}}
    & $\mathtt{Meta}$-$\mathtt{fair}$-sr
    & \multicolumn{1}{r}{0.977}
    & \multicolumn{1}{r}{0.849}
    & \multicolumn{1}{r}{0.128}
    & \multicolumn{1}{r}{0.102}
    & \multicolumn{1}{r}{0.492}
    & \multicolumn{1}{r}{0.390}
    & \multicolumn{1}{r}{0.579}
    & \multicolumn{1}{r}{0.403}
    & \multicolumn{1}{r}{0.176}
    & \multicolumn{1}{r}{0.927}
    & \multicolumn{1}{r}{0.609}
    & \multicolumn{1}{r}{0.291}
    & \multicolumn{1}{r}{0.097}
    & \multicolumn{1}{r}{0.622}
    \\
    
    &
    & $\mathtt{Meta}$-$\mathtt{fair}$-fdr
    & \multicolumn{1}{r}{0.965}
    & \multicolumn{1}{r}{0.901}
    & \multicolumn{1}{r}{0.064}
    & \multicolumn{1}{r}{0.162}
    & \multicolumn{1}{r}{0.356}
    & \multicolumn{1}{r}{0.194}
    & \multicolumn{1}{r}{0.545}
    & \multicolumn{1}{r}{0.483}
    & \multicolumn{1}{r}{0.062}
    & \multicolumn{1}{r}{0.884}
    & \multicolumn{1}{r}{0.719}
    & \multicolumn{1}{r}{0.248}
    & \multicolumn{1}{r}{0.013}
    & \multicolumn{1}{r}{0.329}
    \\
    
    &
    & $\mathtt{AD}$
    & \multicolumn{1}{r}{0.124}
    & \multicolumn{1}{r}{0.167}
    & \multicolumn{1}{r}{0.043}
    & \multicolumn{1}{r}{0.638}
    & \multicolumn{1}{r}{0.467}
    & \multicolumn{1}{r}{0.171}
    & \multicolumn{1}{r}{0.549}
    & \multicolumn{1}{r}{0.621}
    & \multicolumn{1}{r}{0.072}
    & \multicolumn{1}{r}{0.275}
    & \multicolumn{1}{r}{0.425}
    & \multicolumn{1}{r}{0.361}
    & \multicolumn{1}{r}{0.281}
    & \multicolumn{1}{r}{0.253}
    \\
    [0.3em]
    
    \cmidrule(lr){2-17} 
    
    & \parbox[c]{2mm}{\multirow{7}{*}{\rotatebox[origin=c]{90}{\ \ \textbf{Post}}}}
    & $\mathtt{CEOP}$-fpr
    & \multicolumn{1}{r}{0.066}
    & \multicolumn{1}{r}{1.000}
    & \multicolumn{1}{r}{0.934}
    & \multicolumn{1}{r}{0.722}
    & \multicolumn{1}{r}{0.000}
    & \multicolumn{1}{r}{0.722}
    & \multicolumn{1}{r}{0.517}
    & \multicolumn{1}{r}{0.706}
    & \multicolumn{1}{r}{0.189}
    & \multicolumn{1}{r}{0.201}
    & \multicolumn{1}{r}{1.000}
    & \multicolumn{1}{r}{0.435}
    & \multicolumn{1}{r}{0.294}
    & \multicolumn{1}{r}{0.699}
    \\
    
    &
    & $\mathtt{CEOP}$-fnr
    & \multicolumn{1}{r}{0.000}
    & \multicolumn{1}{r}{0.247}
    & \multicolumn{1}{r}{0.247}
    & \multicolumn{1}{r}{1.000}
    & \multicolumn{1}{r}{0.390}
    & \multicolumn{1}{r}{0.610}
    & \multicolumn{1}{r}{0.364}
    & \multicolumn{1}{r}{0.652}
    & \multicolumn{1}{r}{0.288}
    & \multicolumn{1}{r}{0.000}
    & \multicolumn{1}{r}{0.503}
    & \multicolumn{1}{r}{0.636}
    & \multicolumn{1}{r}{0.203}
    & \multicolumn{1}{r}{0.900}
    \\
    
    &
    & $\mathtt{CEOP}$-weighted
    & \multicolumn{1}{r}{0.000}
    & \multicolumn{1}{r}{0.194}
    & \multicolumn{1}{r}{0.194}
    & \multicolumn{1}{r}{1.000}
    & \multicolumn{1}{r}{0.405}
    & \multicolumn{1}{r}{0.495}
    & \multicolumn{1}{r}{0.364}
    & \multicolumn{1}{r}{0.657}
    & \multicolumn{1}{r}{0.292}
    & \multicolumn{1}{r}{0.000}
    & \multicolumn{1}{r}{0.477}
    & \multicolumn{1}{r}{0.636}
    & \multicolumn{1}{r}{0.229}
    & \multicolumn{1}{r}{0.900}
    \\
    
    
    &
    & $\mathtt{ROC}$-aod
    & \multicolumn{1}{r}{0.004}
    & \multicolumn{1}{r}{0.019}
    & \multicolumn{1}{r}{0.015}
    & \multicolumn{1}{r}{0.978}
    & \multicolumn{1}{r}{0.900}
    & \multicolumn{1}{r}{0.078}
    & \multicolumn{1}{r}{0.377}
    & \multicolumn{1}{r}{0.360}
    & \multicolumn{1}{r}{0.017}
    & \multicolumn{1}{r}{0.016}
    & \multicolumn{1}{r}{0.076}
    & \multicolumn{1}{r}{0.620}
    & \multicolumn{1}{r}{0.630}
    & \multicolumn{1}{r}{0.879}
    \\
    
    &
    & $\mathtt{ROC}$-eod
    & \multicolumn{1}{r}{0.019}
    & \multicolumn{1}{r}{0.046}
    & \multicolumn{1}{r}{0.027}
    & \multicolumn{1}{r}{0.911}
    & \multicolumn{1}{r}{0.782}
    & \multicolumn{1}{r}{0.129}
    & \multicolumn{1}{r}{0.414}
    & \multicolumn{1}{r}{0.434}
    & \multicolumn{1}{r}{0.020}
    & \multicolumn{1}{r}{0.064}
    & \multicolumn{1}{r}{0.167}
    & \multicolumn{1}{r}{0.572}
    & \multicolumn{1}{r}{0.539}
    & \multicolumn{1}{r}{0.517}
    \\ [0.3em]

    \bottomrule
    \\
    \parbox[c]{2mm}{\multirow{15}{*}{\rotatebox[origin=c]{90}{\ \ \small \textbf{MEPS}}}}
    & \parbox[c]{2mm}{\multirow{3}{*}{\rotatebox[origin=c]{90}{\ \ $\cape$}}}
    & $\cape$-$\Testdata$
    & \multicolumn{1}{r}{0.131}
    & \multicolumn{1}{r}{0.068}
    & \multicolumn{1}{r}{0.063}
    & \multicolumn{1}{r}{0.425}
    & \multicolumn{1}{r}{0.488}
    & \multicolumn{1}{r}{0.063}
    & \multicolumn{1}{r}{0.794}
    & \multicolumn{1}{r}{0.883}
    & \multicolumn{1}{r}{0.089}
    & \multicolumn{1}{r}{0.243}
    & \multicolumn{1}{r}{0.120}
    & \multicolumn{1}{r}{0.010}
    & \multicolumn{1}{r}{0.003}
    & \multicolumn{1}{r}{0.135}
    \\

    &
    & $\cape$-1
    & \multicolumn{1}{r}{0.175}
    & \multicolumn{1}{r}{0.087}
    & \multicolumn{1}{r}{0.088}
    & \multicolumn{1}{r}{0.347}
    & \multicolumn{1}{r}{0.423}
    & \multicolumn{1}{r}{0.076}
    & \multicolumn{1}{r}{0.781}
    & \multicolumn{1}{r}{0.874}
    & \multicolumn{1}{r}{0.093}
    & \multicolumn{1}{r}{0.296}
    & \multicolumn{1}{r}{0.144}
    & \multicolumn{1}{r}{0.043}
    & \multicolumn{1}{r}{0.027}
    & \multicolumn{1}{r}{0.049}
    \\[0.3em]

    
    \cmidrule(lr){2-17}
    
    &
    & $\BClass$
    & \multicolumn{1}{r}{0.004}
    & \multicolumn{1}{r}{0.012}
    & \multicolumn{1}{r}{0.008}
    & \multicolumn{1}{r}{0.910}
    & \multicolumn{1}{r}{0.888}
    & \multicolumn{1}{r}{0.022}
    & \multicolumn{1}{r}{0.766}
    & \multicolumn{1}{r}{0.890}
    & \multicolumn{1}{r}{0.124}
    & \multicolumn{1}{r}{0.037}
    & \multicolumn{1}{r}{0.014}
    & \multicolumn{1}{r}{0.216}
    & \multicolumn{1}{r}{0.103}
    & \multicolumn{1}{r}{0.483}
    \\[0.3em]
    
    \cmidrule(lr){2-17} 
  
    &\parbox[c]{2mm}{\multirow{2}{*}{\rotatebox[origin=c]{90}{\ \ {\textbf{Pre}}}}}
    
    & $\mathtt{Reweigh}$
    & \multicolumn{1}{r}{0.276}
    & \multicolumn{1}{r}{0.242}
    & \multicolumn{1}{r}{0.034}
    & \multicolumn{1}{r}{0.250}
    & \multicolumn{1}{r}{0.226}
    & \multicolumn{1}{r}{0.024}
    & \multicolumn{1}{r}{0.731}
    & \multicolumn{1}{r}{0.760}
    & \multicolumn{1}{r}{0.029}
    & \multicolumn{1}{r}{0.396}
    & \multicolumn{1}{r}{0.305}
    & \multicolumn{1}{r}{0.143}
    & \multicolumn{1}{r}{0.188}
    & \multicolumn{1}{r}{0.862} \\ [0.4em]
    
    \cmidrule(lr){2-17}

    &\parbox[c]{2mm}{\multirow{4}{*}{\rotatebox[origin=c]{90}{\ \ \textbf{In}}}}
    & $\mathtt{Meta}$-$\mathtt{fair}$-sr
    & \multicolumn{1}{r}{0.322}
    & \multicolumn{1}{r}{0.213}
    & \multicolumn{1}{r}{0.109}
    & \multicolumn{1}{r}{0.210}
    & \multicolumn{1}{r}{0.243}
    & \multicolumn{1}{r}{0.033}
    & \multicolumn{1}{r}{0.706}
    & \multicolumn{1}{r}{0.783}
    & \multicolumn{1}{r}{0.077}
    & \multicolumn{1}{r}{0.440}
    & \multicolumn{1}{r}{0.277}
    & \multicolumn{1}{r}{0.187}
    & \multicolumn{1}{r}{0.160}
    & \multicolumn{1}{r}{0.572}
    \\
    
    &
    & $\mathtt{Meta}$-$\mathtt{fair}$-fdr
    & \multicolumn{1}{r}{0.347}
    & \multicolumn{1}{r}{0.254}
    & \multicolumn{1}{r}{0.102}
    & \multicolumn{1}{r}{0.193}
    & \multicolumn{1}{r}{0.218}
    & \multicolumn{1}{r}{0.025}
    & \multicolumn{1}{r}{0.692}
    & \multicolumn{1}{r}{0.758}
    & \multicolumn{1}{r}{0.066}
    & \multicolumn{1}{r}{0.463}
    & \multicolumn{1}{r}{0.308}
    & \multicolumn{1}{r}{0.210}
    & \multicolumn{1}{r}{0.191}
    & \multicolumn{1}{r}{0.657}
    \\
    
    &
    & $\mathtt{AD}$
    & \multicolumn{1}{r}{0.062}
    & \multicolumn{1}{r}{0.051}
    & \multicolumn{1}{r}{0.011}
    & \multicolumn{1}{r}{0.644}
    & \multicolumn{1}{r}{0.569}
    & \multicolumn{1}{r}{0.075}
    & \multicolumn{1}{r}{0.791}
    & \multicolumn{1}{r}{0.889}
    & \multicolumn{1}{r}{0.098}
    & \multicolumn{1}{r}{0.136}
    & \multicolumn{1}{r}{0.095}
    & \multicolumn{1}{r}{0.117}
    & \multicolumn{1}{r}{0.022}
    & \multicolumn{1}{r}{0.728}
    \\
    [0.3em]
    
    \cmidrule(lr){2-17}
    
    &\parbox[c]{2mm}{\multirow{7}{*}{\rotatebox[origin=c]{90}{\ \ \textbf{Post}}}}
    & $\mathtt{CEOP}$-fpr
    & \multicolumn{1}{r}{0.078}
    & \multicolumn{1}{r}{0.000}
    & \multicolumn{1}{r}{0.078}
    & \multicolumn{1}{r}{0.573}
    & \multicolumn{1}{r}{1.000}
    & \multicolumn{1}{r}{0.427}
    & \multicolumn{1}{r}{0.797}
    & \multicolumn{1}{r}{0.883}
    & \multicolumn{1}{r}{0.086}
    & \multicolumn{1}{r}{0.166}
    & \multicolumn{1}{r}{0.000}
    & \multicolumn{1}{r}{0.087}
    & \multicolumn{1}{r}{0.117}
    & \multicolumn{1}{r}{$\mathtt{undef}$}
    \\
    
    &
    & $\mathtt{CEOP}$-fnr
    & \multicolumn{1}{r}{0.034}
    & \multicolumn{1}{r}{0.022}
    & \multicolumn{1}{r}{0.012}
    & \multicolumn{1}{r}{0.803}
    & \multicolumn{1}{r}{0.704}
    & \multicolumn{1}{r}{0.102}
    & \multicolumn{1}{r}{0.771}
    & \multicolumn{1}{r}{0.899}
    & \multicolumn{1}{r}{0.128}
    & \multicolumn{1}{r}{0.075}
    & \multicolumn{1}{r}{0.054}
    & \multicolumn{1}{r}{0.178}
    & \multicolumn{1}{r}{0.063}
    & \multicolumn{1}{r}{0.771}
    \\
    
    &
    & $\mathtt{CEOP}$-weighted
    & \multicolumn{1}{r}{0.032}
    & \multicolumn{1}{r}{0.021}
    & \multicolumn{1}{r}{0.011}
    & \multicolumn{1}{r}{0.816}
    & \multicolumn{1}{r}{0.704}
    & \multicolumn{1}{r}{0.112}
    & \multicolumn{1}{r}{0.770}
    & \multicolumn{1}{r}{0.899}
    & \multicolumn{1}{r}{0.129}
    & \multicolumn{1}{r}{0.070}
    & \multicolumn{1}{r}{0.053}
    & \multicolumn{1}{r}{0.183}
    & \multicolumn{1}{r}{0.064}
    & \multicolumn{1}{r}{0.839}
    \\
    
    &
    & $\mathtt{ROC}$-spd
    & \multicolumn{1}{r}{0.233}
    & \multicolumn{1}{r}{0.220}
    & \multicolumn{1}{r}{0.013}
    & \multicolumn{1}{r}{0.284}
    & \multicolumn{1}{r}{0.243}
    & \multicolumn{1}{r}{0.041}
    & \multicolumn{1}{r}{0.754}
    & \multicolumn{1}{r}{0.777}
    & \multicolumn{1}{r}{0.023}
    & \multicolumn{1}{r}{0.355}
    & \multicolumn{1}{r}{0.283}
    & \multicolumn{1}{r}{0.102}
    & \multicolumn{1}{r}{0.166}
    & \multicolumn{1}{r}{0.906}
    \\
    
    &
    & $\mathtt{ROC}$-aod
    & \multicolumn{1}{r}{0.329}
    & \multicolumn{1}{r}{0.253}
    & \multicolumn{1}{r}{0.076}
    & \multicolumn{1}{r}{0.205}
    & \multicolumn{1}{r}{0.210}
    & \multicolumn{1}{r}{0.005}
    & \multicolumn{1}{r}{0.702}
    & \multicolumn{1}{r}{0.752}
    & \multicolumn{1}{r}{0.050}
    & \multicolumn{1}{r}{0.447}
    & \multicolumn{1}{r}{0.216}
    & \multicolumn{1}{r}{0.194}
    & \multicolumn{1}{r}{0.199}
    & \multicolumn{1}{r}{0.745}
    \\
    
    &
    & $\mathtt{ROC}$-eod
    & \multicolumn{1}{r}{0.336}
    & \multicolumn{1}{r}{0.233}
    & \multicolumn{1}{r}{0.103}
    & \multicolumn{1}{r}{0.194}
    & \multicolumn{1}{r}{0.227}
    & \multicolumn{1}{r}{0.033}
    & \multicolumn{1}{r}{0.700}
    & \multicolumn{1}{r}{0.768}
    & \multicolumn{1}{r}{0.068}
    & \multicolumn{1}{r}{0.455}
    & \multicolumn{1}{r}{0.296}
    & \multicolumn{1}{r}{0.202}
    & \multicolumn{1}{r}{0.179}
    & \multicolumn{1}{r}{0.623}
    \\
    
    \bottomrule
    
\end{tabular}
\caption{\label{table:results}Comparing $\cape$ with $\BClass$ and other fair classifiers on the COMPAS and MEPS test datasets.}
\end{table*}

For MEPS, we observe a shift only for the group $Z$=$1$, between the training set (surveys in the year $2015$) and test set (surveys in $2016$). Since the differences in prevalences are rather small, this dataset is of interest---it allows us to investigate the performance of $\cape$ when the extent of prior probability shift is small. Though the prevalences estimated by $\qalgo$ seem similar to the training set, the difference in the estimates of $\qalgo$ and the prevalences of the test datasets are only $0.02$ and $0.006$, for $Z$=$0$ and $Z$=$1$ respectively, and are thus good estimates.
      
Table~\ref{table:results} summarizes the results on COMPAS and MEPS datasets for $\cape$, $\BClass$, and the other fair algorithms described in Section~\ref{sec:other-fair-algo}. Due to lack of space, we elaborate upon the results of the COMPAS dataset only. 

$\cape$-$\Testdata$ considers the whole test dataset $\Testdata$ during prediction, while $\cape$-$1$ considers individual instances during prediction (similar to what the other algorithms do). We expect $\cape$-$\Testdata$ to perform better than $\cape$-$1$ since the $\qalgo$ module is expected to perform better for larger test datasets. 

$\cape$-$\Testdata$ outperforms $\BClass$ on $\Delta$, and all the other fairness metrics (FPR-diff, FNR-diff, Accuracy-diff, and PE). The prediction prevalences of $\BClass$ ($0.284$ and $0.542$) are close to the true prevalences of the \emph{training} set ($0.327$ and $0.486$), which highlights the inability of $\BClass$ to account for the prior probability shift. One critical observation about $\cape$-$\Testdata$ is that FPR-diff=$0.081$ and FNR-diff=$0.027$ which implies that the predictions exhibit equalized odds. In comparison, these differences for $\BClass$ are $0.127$ and $0.289$. In fact, for $\BClass$,  $\mathrm{FPR}_{Z=1}$ is almost twice than $\mathrm{FPR}_{Z=0}$, whereas $\mathrm{FNR}_{Z=1}$ is almost half of $\mathrm{FNR}_{Z=0}$. This implies that $\BClass$ imposes unfair higher risks of recidivism on African-American defendants, while Caucasian defendants are predicted to have lower risks than they actually do.

The true prevalences of the two subgroups in the test dataset are close to each other (namely, $0.636$ and $0.706$). Thus, a classifier aiming to achieve statistical parity is expected to do well on PE. We observe this in Table~\ref{table:results}, where $\mathtt{ROC}$-spd (\underline{s}tatistical \underline{p}arity \underline{d}ifference) has lowest PE ($0.056$). However, its false positive rates are more than $0.9$ for both subgroups, which is unfair and harmful for both subgroups. This unfairness is also captured by the high $\Delta$ value of $\mathtt{ROC}$-spd. We observe that $\Delta^{0}$ is the lowest for $\cape$-$\Testdata$ among all other classifiers. For $Z=1$, $\mathtt{Meta\_fair}$-fdr (\underline{f}alse \underline{d}iscovery \underline{r}ate with the group fairness trade-off parameter $\tau$ set to $0.8$), is the only other fair classifier with a lower $\Delta^{1}$ value. However, the predictions of $\mathtt{Meta\_fair}$-fdr have high false positive rates, and low accuracies.



Note that a trivial classifier, which always predicts positive labels, will have FNR-diff=$0$, FPR-diff=$0$, Accuracy-diff=$0.07$. 
However, this classifier will have high PD for both groups ($\Delta^0$=$0.364$ and $\Delta^1$=$0.294$), which indicates a substantial skew between the false positives and false negatives. Thus, PD is an important metric that, in addition to accuracy, captures the learning ability of the classifiers.

We make a final observation on our experimental results. Since both COMPAS and MEPS are real-world datasets, the distributional changes highlighted in Table~\ref{table:quant} may not be due to prior probability shifts alone. Although $\cape$ is designed to handle only prior probability shifts, the good performance of both $\cape$-$\Testdata$ and $\cape$-$1$ on a wide range of metrics for these real-world datasets shows the robustness of our approach. 



A possible extension of $\cape$ includes handling other distributional changes, such as \emph{concept drifts}, that is, when $\mathcal{P}(X|Y,Z)$ changes but $\mathcal{P}(Y|Z)$ remains same.

\subsection{Acknowledgment}
Arpita Biswas gratefully acknowledges the support of a Google PhD Fellowship Award. 

\scriptsize
\bibliographystyle{named}
\bibliography{ijcai20bib}

\end{document}